\DeclareMathOperator*{\argmax}{arg\,max}
\renewcommand{\algorithmicrequire}{\textbf{Input:}}
\newtheorem{thm}{Theorem}
\newtheorem{df}{Definition}
\newtheorem{rem}{Remark}
\newtheorem{prop}{Proposition}
\newcommand{\norm}[1]{\left\lVert#1\right\rVert}
\newenvironment{varalgorithm}[1]
  {\algorithm}
  {\endalgorithm}
\title{Potential-Based Advice for Stochastic Policy Learning}
\author{Baicen Xiao$^{1}$, Bhaskar Ramasubramanian$^{1}$, Andrew Clark$^{2}$,\\ Hannaneh Hajishirzi$^{1}$, Linda Bushnell$^{1}$, and Radha Poovendran$^{1}$
\thanks{$^{1}$University of Washington, Seattle, WA 98195, USA. 
       {\tt\small \{bcxiao, bhaskarr, hannaneh, lb2, rp3\}@uw.edu}}%
\thanks{$^{2}$Worcester Polytechnic Institute, Worcester, MA 01609, USA.  
        {\tt\small aclark@wpi.edu}}%
}
\date{}
\begin{document}
	\maketitle
\begin{abstract}  
This paper augments the reward received by a reinforcement learning agent with potential functions in order to help the agent learn (possibly stochastic) optimal policies. 
We show that a potential-based reward shaping scheme is able to preserve optimality of stochastic policies, and demonstrate that the ability of an agent to learn an optimal policy is not affected when this scheme is augmented to soft Q-learning. 
We propose a method to impart potential-based advice schemes to policy gradient algorithms. 
An algorithm that considers an advantage actor-critic architecture augmented with this scheme is proposed, and we give guarantees on its convergence. 
Finally, we evaluate our approach on a puddle-jump grid world with indistinguishable states, and the continuous state and action mountain car environment from classical control.	 
Our results indicate that these schemes allow the agent to learn a stochastic optimal policy faster and obtain a higher average reward.
\end{abstract}
\section{Introduction}
Reinforcement learning (RL) is a framework that allows an agent to complete tasks in an environment, even when a model of the environment is not known. 
The agent `learns' to complete a task by maximizing its expected long-term reward, where the reward signal is supplied by the environment. 
RL algorithms have been successfully implemented in many fields, including robotics \cite{hafner2011reinforcement, lillicrap2016continuous}, and games \cite{mnih2015human, silver2016mastering} . 
However, it remains difficult for an RL agent to master new tasks in unseen environments. 
This is especially true when the reward given by the environment is sparse/ significantly delayed.

It may be possible to guide an RL agent towards more promising solutions faster, if it is equipped with some form of \emph{prior knowledge} about the environment. 
This can be encoded by modifying the reward signal received by the agent during training. 
%
However, the modification must be carried out in a principled manner, since providing an additional reward at each step might distract the agent from the true goal \cite{randlov1998learning}. 
Potential-based reward shaping (PBRS) is one such method that augments the reward in an environment specified by a Markov Decision Process (MDP) with a term that is a difference of \emph{potentials} \cite{Ng1999policy}. 
This method is attractive since it easily allows for the recovery of optimal policies, while enabling the agent to learn these policies faster.

Potential functions are typically functions of states. 
This could be a limitation, since in some cases, such a function may not be able to encode all information available in the environment. 
To allow for imparting more information to the agent, a potential-based advice (PBA) scheme was proposed in \cite{Wiewiora2003principled}. 
The potential functions in PBA include both states and actions as their arguments. 

To the best of our knowledge, PBRS and PBA schemes in the literature \cite{Ng1999policy, Wiewiora2003principled, Devlin2012dynamic} assume that an optimal policy is deterministic. 
This will not always be the case, since an optimal policy might be a stochastic policy. 
This is especially true when there are states in the environment that are partially observable or indistinguishable from each other. 
Moreover, the aforementioned papers limit their focus to discrete state and action spaces.

In this paper, we study the addition of PBRS and PBA schemes to the reward, in settings where: \emph{i)} the optimal policy will be stochastic, and \emph{ii)} state and action spaces may be continuous. 
We additionally provide guarantees on the convergence of an advantage actor-critic architecture that is augmented with a PBA scheme. 
%
We make the following contributions:
\begin{itemize}
%
\item We prove that the ability of an agent to learn an optimal stochastic policy remains unaffected when augmenting PBRS to soft Q-learning.

\item We propose a technique for adapting PBA in policy-based methods, in order to use these schemes in environments with continuous state and action spaces. 

\item We present an Algorithm, \textbf{AC-PBA}, describing an advantage actor-critic architecture augmented with PBA, and provide guarantees on its convergence.

\item We evaluate our approach on two experimental domains: a discrete-state, discrete-action \emph{Puddle-jump Gridworld} that has indistinguishable states, and a continuous-state, continuous-action \emph{Mountain Car}. 
%
%
\end{itemize}
%

The remainder of this paper is organized as follows: Section \ref{RelWork} presents related work in reward shaping. Required preliminaries to RL, PBRS and PBA is presented in Section \ref{PrelimSection}. 
Section \ref{PBRSSection} presents our results on using PBRS for stochastic policy learning. 
We present a method to augment PBA to policy gradient frameworks and an algorithm detailing this 
in Section \ref{PBASection}. 
Experiments validating our approach are reported in Section \ref{ExptSection}, and  we conclude the paper in Section \ref{ConclusionSection}.
\section{Related Work}\label{RelWork}

Shaping or augmenting the reward received by an RL agent in order to enable it to learn optimal policies faster is an active area of research. 
Reward modification via human feedback was used in \cite{thomaz2006reinforcement, knox2010combining} to interactively shape an agent's response so that it learned a desired behavior. 
However, frequent human supervision is usually costly and may not possible in every situation. 
A curiosity-based RL algorithm for sparse reward environments was presented in \cite{pathak2017curiosity}, where an intrinsic reward signal characterized the prediction error of the agent as a curiosity reward. 
The reward received by the agent was augmented with a function that represented the number of times the agent had visited a state in \cite{tang2017exploration}. 

Entropy regularization as a way to encourage exploration of policies during the early stages of learning was studied in \cite{williams1991function} and \cite{mnih2016asynchronous}. 
This was used to lead a policy towards states with a high reward in \cite {levine2013guided} and \cite{levine2016end}. 

Static potential-based functions were shown to preserve the optimality of deterministic policies in \cite{Ng1999policy}. 
This property was extended to dynamic potential-based functions in \cite{Devlin2012dynamic}. 
The authors of \cite{Wiewiora2003init} showed that when an agent learned a policy using 
Q-learning,  
applying PBRS at each training step was equivalent to initializing the Q-function with the potentials. 
They studied value-based methods, but restricted their focus to learning deterministic policies. 
The authors of \cite{Devlin2012express} demonstrated a method to transform a reward function into a potential-based function during training. 
The potential function in PBA was obtained using an `experience filter' in \cite{li2018introspective}.

The use of PBRS in model-based RL was studied in \cite{asmuth2008potential}, and for episodic RL in \cite{grzes2017reward}. 
PBRS was extended to planning in partially observable domains in \cite{eck2016potential}. 
However, these papers only considered the finite-horizon case. 
In comparison, we consider the infinite horizon, discounted cost setting in this paper. 

In control theoretic settings, RL algorithms have been used to establish guarantees on convergence to an optimal controller for the Linear Quadratic Regulator, when a model of the underlying system was not known in \cite{bradtke1993reinforcement, fazel2018global}. 
A survey of using RL for control is presented in \cite{bucsoniu2018reinforcement}. 
OpenAI Gym \cite{brockman2016openai} enables the solving of several problems in classical control using RL algorithms. 
\section{Preliminaries}\label{PrelimSection}
%
\subsection{Reinforcement Learning}

An MDP \cite{puterman2014markov} is a tuple $(S,A,\mathbb{T},\rho_0, R)$. 
$S$ is the set of states, $A$ the set of actions, 
$\mathbb{T}:S \times A \times S \rightarrow [0,1]$ encodes $\mathbb{P}(s_{t+1}|s_t,a_t)$, the probability of transition to $s_{t+1}$, given current state $s_t$ and action $a_t$. 
$\rho_0$ is a probability distribution over the initial states. 
$R : S \times A \rightarrow \mathbb{R}$ denotes the reward that the agent receives when transitioning from $s_t$ while taking action $a_t$. 
In this paper, $R < \infty$. 

The goal for an RL agent \cite{sutton2018reinforcement} is to learn a \emph{policy} $\pi$, 
in order to maximize 
$J:=\mathbb{E}_{\tau \sim \pi}[\sum_{t=0}^{\infty}\gamma^t R(s_t,a_t)]$. 
Here, $\gamma$ is a discounting factor, and the expectation is taken over the trajectory $\tau=(s_0,a_0,r_0,s_1,\dots)$ induced by policy $\pi$. 
If $\pi: S \rightarrow A$, the policy is \emph{deterministic}. 
On the other hand, a randomized policy returns a probability distribution over the set of actions, and is denoted $\pi: S \times A \rightarrow [0,1]$. 

The value of a state-action pair $(s,a)$ following policy $\pi$ is represented by the \emph{Q-function}, written $Q^{\pi}(s,a) = \mathbb{E}_{\tau \sim \pi}[\sum_{t=0}^{\infty}\gamma^t R(s_t,a_t)|s_0=s,a_0=a]$. 
The Q-function allows us to calculate the state value $V^{\pi}(s) = \mathbb{E}_{a \sim \pi}[Q^{\pi}(s,a)]$. 
The advantage of a particular action $a$, over other actions at a state $s$ is defined by $A^{\pi}(s,a) := Q^{\pi}(s,a)-V^{\pi}(s)$.
\subsection{Value-based and Policy-based Methods}

The RL problem has two general solution techniques. 
\emph{Value-based} methods determine an optimal policy by maintaining a set of reward estimates when following a particular policy. 
At each state, an action that achieves the highest (expected) reward is taken. 
Typical value-based methods to learn greedy (determininistic) policies include Q-learning and Sarsa-learning \cite{sutton2018reinforcement}. 
Recently, the authors of \cite{haarnoja2017reinforcement} proposed \emph{soft Q-learning}, which is a value-based method that is able to learn stochastic policies.

In comparison, \emph{policy-based} methods directly search over the policy space \cite{sutton2018reinforcement}. 
Starting from an initial policy, specified by a set of parameters, these methods compute the expected reward for this policy, and update the parameter set according to certain rules to improve the policy. 
Policy gradient \cite{sutton2000policy} is one way to achieve policy improvement. 
This method repeatedly computes (an estimate of) the gradient of the expected reward with respect to the policy parameters. 
Policy-based approaches usually exhibit better convergence properties, and can be used in continuous action spaces \cite{haarnoja2018soft}. 
They can also be used to learn stochastic policies. 
REINFORCE and actor-critic are examples of policy gradient algorithms \cite{sutton2018reinforcement}.
\subsection{PBRS and PBA}

Reward shaping methods augment the environment reward $R$ with an additional reward $F \in \mathbb{R}$, $F< \infty$. 
This changes 
the structure of the original MDP $M(=(S,A,\mathbb{T},\rho_0, R))$ to $M'=(S,A,\mathbb{T},\rho_0, R+F)$. 
The goal is to choose $F$ so that an optimal policy for $M'$, $\pi^{*}_{M'}$, is also optimal for the original MDP $M$. 
\emph{Potential-based reward shaping} (PBRS) schemes were shown to be able to preserve the optimality of deterministic policies in \cite{Ng1999policy}.

In PBRS, the function $F$ is defined as a difference of \emph{potentials}, $\phi(\cdot)$. 
Specifically, $F(s_t,a_t,s_{t+1}) := \gamma \phi(s_{t+1}) - \phi(s_t)$. 
Then, the Q-function, $Q^{*}_{M}(s,a)$, of the optimal greedy policy for $M$ and the optimal Q-function $Q^{*}_{M'}(s,a)$ for $M'$ are related by: 
$Q^{*}_{M'}(s,a)= Q^{*}_{M}(s,a) - \phi(s)$.
Therefore, the optimal greedy policy is not changed \cite{Ng1999policy, Devlin2012dynamic}, since:
\begin{align*}
&\pi^{*}_{M'}(s) \in \argmax_{a\in A}~Q^{*}_{M'}(s,a) \\
&\qquad = \argmax_{a\in A}~\big(Q^{*}_{M}(s,a) - \phi(s)\big) = \argmax_{a\in A}~Q^{*}_{M}(s,a).
\end{align*}

The authors of \cite{Wiewiora2003principled} augmented $\phi(s)$ to include action $a$ as an argument. 
They termed this \emph{potential-based advice} (PBA). 
There are two forms-- \emph{look-ahead PBA} and \emph{look-back PBA}-- respectively defined by:
\begin{align}
F(s_{t},a_{t},s_{t+1},a_{t+1}) &= \gamma \phi(s_{t+1},a_{t+1}) - \phi(s_{t},a_{t})\label{lookaheadPBA}\\
F(s_{t},a_{t},s_{t-1},a_{t-1}) &=  \phi(s_{t},a_{t}) - {\gamma}^{-1}\phi(s_{t-1},a_{t-1}).\label{lookbackPBA}
\end{align}

For the look-ahead PBA scheme, the state-action value function for $M$ following policy $\pi$ is given by: 
\begin{align}\label{PBAq}
Q^{\pi}_{M}(s,a) =  Q^{\pi}_{M'}(s,a)+\phi(s,a).
\end{align} 
The optimal greedy policy for $M$ can be recovered from the optimal state-action value function for $M'$ from:
\begin{align}\label{PBAp}
\pi^*_{M}(s_t) &\in \argmax_{a \in A} \big(Q^{*}_{M'}(s_t,a)+\phi(s_t,a)\big).
\end{align}

The optimal greedy policy for $M$ using look-back PBA can be recovered similarly.
\section{PBRS for Stochastic Policy Learning}\label{PBRSSection}

The existing literature on PBRS has focused on augmenting value-based methods to learn optimal deterministic policies. 
In this section, we first show that PBRS preserves optimality, when the optimal policy is stochastic. 
Then, we show that the \emph{learnability} will not be changed when using PBRS in soft Q-learning.
%
%
\begin{prop}\label{PBRSResult}
Assume that the optimal policy is stochastic. 
Then, with $F:=\gamma\phi(s_{t+1})-\phi(s_t)$, PBRS preserves the optimality of stochastic policies.
\end{prop}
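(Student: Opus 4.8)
The plan is to show that the shaping term alters the objective $J$ only by a constant that is independent of the policy, so that the set of maximizers of $J$ — now permitted to contain stochastic policies — is identical for $M$ and $M'$. The argument quoted in the preliminaries recovers only a single greedy (deterministic) policy through $\argmax_a Q^*_{M'}(s,a)$; to handle stochastic optima I would instead compare the two objectives directly rather than comparing argmaxes at a fixed state.

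First I would argue trajectory-wise. For a fixed trajectory $\tau=(s_0,a_0,s_1,a_1,\dots)$, substituting $F=\gamma\phi(s_{t+1})-\phi(s_t)$ into the shaped return produces a telescoping series,
\[
\sum_{t=0}^{\infty}\gamma^t\big(\gamma\phi(s_{t+1})-\phi(s_t)\big)=-\phi(s_0)+\lim_{T\to\infty}\gamma^{T+1}\phi(s_{T+1}).
\]
Since $\gamma<1$ and $\phi$ is bounded (which I read off from the standing assumption $F<\infty$), the boundary term vanishes, so the shaped return of every trajectory equals its original return minus $\phi(s_0)$.

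Next I would take expectations. Because $M$ and $M'$ share the same transition kernel $\mathbb{T}$, the same initial distribution $\rho_0$, and are evaluated under the same (possibly stochastic) policy $\pi$, they induce identical distributions over trajectories. Hence $Q^{\pi}_{M'}(s,a)=Q^{\pi}_{M}(s,a)-\phi(s)$ and $V^{\pi}_{M'}(s)=V^{\pi}_{M}(s)-\phi(s)$ hold for \emph{every} policy $\pi$, which is the natural generalization, to arbitrary (including stochastic) policies, of the optimal-$Q$ identity stated in the preliminaries. Averaging $V^{\pi}_{M'}$ over $\rho_0$ then yields $J_{M'}(\pi)=J_{M}(\pi)-\mathbb{E}_{s_0\sim\rho_0}[\phi(s_0)]$, where the subtracted quantity depends only on $\rho_0$ and $\phi$, not on $\pi$.

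Finally, since $J_{M'}$ and $J_{M}$ differ by this policy-independent constant, they share the same set of maximizers over the space of all stochastic policies; thus a stochastic policy is optimal for $M'$ if and only if it is optimal for $M$, which is exactly the claim. The main obstacle is precisely the step the deterministic proof skips: establishing the potential identity for an arbitrary policy rather than only the optimal one. The delicate point there is justifying the telescoping limit — interchanging the infinite discounted sum with the expectation over $\tau$ and confirming that $\gamma^{T+1}\phi(s_{T+1})\to 0$ — for which boundedness of $\phi$ together with $\gamma<1$ is exactly what is needed.
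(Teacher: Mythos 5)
Your proposal is correct and follows essentially the same route as the paper: both telescope the discounted sum of shaping terms to show that the shaped objective equals the original objective minus the policy-independent constant $\mathbb{E}_{s_0\sim\rho_0}[\phi(s_0)]$, so the two objectives share the same maximizers. Your version is slightly more careful than the paper's in that it explicitly justifies the vanishing boundary term $\gamma^{T+1}\phi(s_{T+1})$ and the interchange of limit and expectation, but this is a refinement of the same argument rather than a different one.
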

\begin{proof}
The goal in the original MDP $M$ was to find a policy $\pi$ in order to maximize:
\begin{align}\label{PBRS_M}
{\pi}_M^* &= \argmax_{\pi}   \mathbb{E}_{\tau \sim \pi}\left[\sum_{t=0}^{\infty}\gamma^t R(s_t,a_t)\right].
\end{align}

In PBRS, the goal is to determine a policy so that:
 \begin{align}
&{\pi}_{M'}^* = \argmax_{\pi} \mathbb{E}_{\tau \sim \pi}\big[\sum_{t=0}^{\infty}\gamma^t \big(R(s_t,a_t)+ F(s_t,a_t,s_{t+1},a_{t+1})\big)\big] \nonumber\\
&= \argmax_{\pi}  \mathbb{E}_{\tau \sim \pi}\big[\sum_{t=0}^{\infty}\gamma^t \big(R(s_t,a_t)+\gamma\phi(s_{t+1})-\phi(s_t)\big)\big] \nonumber\\
 &= \argmax_{\pi} \bigg[\mathbb{E}_{\tau \sim \pi}\big[\sum_{t=0}^{\infty}\gamma^t R(s_t,a_t)\big]-\mathbb{E}_{\tau \sim \pi}\big[\phi(s_0)\big]\bigg]\nonumber\\
 &=\argmax_{\pi}~ \mathbb{E}_{\tau \sim \pi}\big[\sum_{t=0}^{\infty}\gamma^t R(s_t,a_t)\big]-\int_{s}\rho_0(s)\phi(s)\text{d}s.\label{PBRS_M'}
 \end{align}
The last term in Equation (\ref{PBRS_M'}) is constant, and doesn't affect the identity of the maximizing policy of (\ref{PBRS_M}).
\end{proof}

 %

Next, we examine the effect on learnability when using PBRS with soft Q-learning. 
Soft Q-learning is a value-based method for stochastic policy learning that was proposed in \cite{haarnoja2017reinforcement}. 
Different from Equation (\ref{PBRS_M}), the goal is to maximize both, the accumulated reward, and the policy entropy at each visited state:
\begin{align}\label{SoftQf}
{\pi}_{\text{soft}}^* &= \argmax_{\pi}   \mathbb{E}_{\tau \sim \pi}\big[\sum_{t=0}^{\infty}\gamma^t \big(R(s_t,a_t)+\alpha\mathcal{H}(\pi(\cdot|s_t))\big)\big].
\end{align}

The entropy term $\mathcal{H}(\pi(\cdot|s_t))$ encourages exploration of the state space, and the parameter $\alpha$ is a trade-off between exploitation and exploration. 

Before stating our result, we summarize the soft Q-learning update procedure. 
From \cite{haarnoja2017reinforcement}, the optimal value-function, $V^*_{\text{soft}}(s_t) $, is given by:
\begin{align}\label{SoftV}
V^*_{\text{soft}}(s_t) = \alpha \log \int_{A}\exp\big(\frac{1}{\alpha}Q^*_{\text{soft}}(s_t,a)\big)\text{d} a. 
\end{align}
The optimal soft Q-function is determined by solving the soft Bellman equation:
\begin{align}\label{SoftQ}
Q^*_{\text{soft}}\big(s_t,a_t\big) = r_t+\gamma \mathbb{E}_{s_{t+1}}\big[V^*_{\text{soft}}(s_{t+1})\big].
\end{align}
The optimal policy can be obtained from Equation (\ref{SoftQ}) as:
\begin{align}\label{Softp}
{\pi}_{\text{soft}}^*(a_t|s_t) = \exp\big(\frac{1}{\alpha}\big(Q^*_{\text{soft}}(s_t,a_t)-V^*_{\text{soft}}(s_t)\big)\big),
\end{align}

In the rest of this Section, we assume both, states and actions are discrete and no function approximator is used. 
We also omit subscripts for $Q_{\text{soft}}$ and $V_{\text{soft}}$, and set $\alpha=1$ for simplicity. 
From Equation (\ref{SoftQ}), and as in Q-learning, soft Q-learning updates the soft Q-function by minimizing the soft Bellman error:
\begin{align}\label{Bellman_error}
\begin{split}
\delta Q_k(s_k,a_k) = r(s_k,a_k) + \gamma V_k(s_{k+1})-Q_k(s_k,a_k),
\end{split}
\end{align}
where $V_k(s_{t+1}) = \log \sum_{a \in A}\exp\big(Q_k(s_{t+1},a)\big)$. 
During training, $\pi_k(a_t|s_t) = \exp\big(Q_k(s_t,a_t)-V_k(s_t)\big)$.
With $\lambda$ denoting the learning rate, the Q-function update is given by:
\begin{align}\label{softupdate}
\begin{split}
Q_{k+1}(s_k,a_k) = Q_k(s_k,a_k) +  \lambda\delta Q_k(s_k,a_k).
\end{split}
\end{align}

The main result of this section shows that the ability of an agent to learn an optimal policy is unaffected when using soft Q-learning augmented with PBRS. 
We define a notion of \emph{learnability}, and use this to establish our claim.

%
During training, an agent encounters a sequence of states, actions, and rewards that serves as `raw-data' which is fed to the RL algorithm. 
Let $L$ and $L'$ denote two RL agents. 
Let $\mathcal{D}_k = (s_k,a_k,r_k,s_{k+1})$ and $\mathcal{D}'_k = (s'_k,a'_k,r'_k,s'_{k+1})$ denote the experience tuple at learning step $k$ from a trajectory used by $L$ and $L'$, respectively. 
\begin{df}[Learnability]\label{learnability}
Denote the accumulated difference in the Q-functions of $L$ and $L'$ after learning for $k$ steps by $\Delta Q_k(s,a)$ and $\Delta Q'_k(s,a)$, respectively. 
Then, given identical sample experiences, (that is, $\mathcal{D}_{k'}=\mathcal{D}'_{k'}$ $\forall k' \leq k$), $L$ and $L'$ are said to have the same learnability if $\Delta Q_{k'}(s,a)=\Delta Q'_{k'}(s,a)$ $\forall k' \leq k  ~ \forall s \forall a$.
\end{df}
\begin{prop}\label{learnability_prop}
Soft Q-learning, with initial soft Q-values $Q(s,a)=Q_0(s,a)$ and augmented with PBRS where state potential is $\phi(s)$, has the same learnability as soft Q-learning without PBRS but with its soft Q-values initialized to $Q(s,a)=Q_0(s,a)+\phi(s)$.
\end{prop}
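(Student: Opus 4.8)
The plan is to prove the claim by induction on the learning step $k$, maintaining throughout the stronger invariant that the two agents' soft Q-functions stay in exact correspondence: namely, that $Q^L_k(s,a) = Q^{L'}_k(s,a) - \phi(s)$ for every $(s,a)$ at every step $k$, where $L$ is the PBRS agent and $L'$ the agent initialized with $Q_0(s,a)+\phi(s)$. Since the accumulated differences of Definition~\ref{learnability} are $\Delta Q^L_k(s,a) = Q^L_k(s,a) - Q_0(s,a)$ and $\Delta Q^{L'}_k(s,a) = Q^{L'}_k(s,a) - (Q_0(s,a)+\phi(s))$, this invariant immediately gives $\Delta Q^{L'}_k = Q^L_k + \phi - Q_0 - \phi = \Delta Q^L_k$, which is exactly the equal-learnability conclusion.

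For the base case $k=0$ the invariant holds by construction, since $Q^{L'}_0 = Q_0 + \phi = Q^L_0 + \phi$. For the inductive step I would assume $Q^L_k(s,a) = Q^{L'}_k(s,a) - \phi(s)$ and compare the updates of Equation~(\ref{softupdate}) applied to the shared sample $\mathcal{D}_k = (s_k,a_k,r_k,s_{k+1})$. The reward seen by $L$ is the shaped quantity $r_k + \gamma\phi(s_{k+1}) - \phi(s_k)$, whereas $L'$ sees only $r_k$. The heart of the argument is to show that the two soft Bellman errors in Equation~(\ref{Bellman_error}) coincide on the updated entry; since the tabular update touches only $(s_k,a_k)$ and leaves every other entry unchanged, equal Bellman errors plus a common rate $\lambda$ preserve the $\phi(s_k)$ offset on that entry and preserve the invariant everywhere else, closing the induction.

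The step I expect to be the main obstacle—and the place where soft Q-learning departs from ordinary Q-learning—is verifying that the soft value function shifts cleanly under the potential. Because $V_k(s) = \log\sum_{a}\exp(Q_k(s,a))$, the inductive hypothesis yields $V^L_k(s) = \log\sum_{a}\exp(Q^{L'}_k(s,a)-\phi(s)) = V^{L'}_k(s) - \phi(s)$, so the additive potential factors through the log-sum-exp exactly as it would through an ordinary $\max$. Granting this identity, I would substitute $V^L_k(s_{k+1}) = V^{L'}_k(s_{k+1}) - \phi(s_{k+1})$ and $Q^L_k(s_k,a_k) = Q^{L'}_k(s_k,a_k) - \phi(s_k)$ into the shaped Bellman error; the shaping terms $\gamma\phi(s_{k+1})$ and $-\phi(s_k)$ then cancel against the matching terms arising from $V^L_k$ and $Q^L_k$, leaving precisely $\delta Q^{L'}_k(s_k,a_k)$. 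This cancellation is the one nontrivial computation, and once it is in hand the invariant, and hence equal learnability for all $k$, follows.
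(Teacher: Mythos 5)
Your proposal is correct and follows essentially the same route as the paper: an induction whose key step is the log-sum-exp shift identity $V^{L}_k(s)=V^{L'}_k(s)-\phi(s)$, which makes the shaping terms $\gamma\phi(s_{k+1})-\phi(s_k)$ cancel so the two soft Bellman errors coincide and the update preserves the correspondence. The only cosmetic difference is that you carry the invariant $Q^{L}_k=Q^{L'}_k-\phi$ directly, whereas the paper phrases its induction hypothesis as equality of the accumulated differences $\Delta Q_k=\Delta Q'_k$; these are equivalent given the initialization.
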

\begin{proof}
Consider an agent $L$ that uses a PBRS scheme during learning and an agent $L'$ that does not use PBRS, but has its soft Q-values initialized as $Q'_0(s,a):=Q_0(s,a)+\phi(s)$, where $Q_0(s,a)$ is the initial Q-value of $L$.
We further assume that $L$ and $L'$ adopt the same learning rate.
From Definition \ref{learnability}, to show that $L$ and $L'$ have the same learnability, we need to show that the soft Bellman errors $\delta Q_{k}(s_t,a_t)$ and $\delta Q'_{k}(s_k,a_k)$ are equal at each training step $k$, given the same experience sets $\mathcal{D}_k$ and $\mathcal{D}_k'$. 
From Equation (\ref{Bellman_error}), the soft Bellman errors for $L$ and $L'$ can be respectively written as:
	\begin{align*}
	\delta Q_k(s_k,a_k)& = r(s_k,a_k) + \gamma\phi(s_{k+1})-\phi(s_k)+ \\ 
	&\qquad \gamma V_k(s_{k+1})-Q_k(s_k,a_k)\\
	\delta Q'_k(s'_k,a'_k) &= r(s'_k,a'_k) + \gamma V'_k(s'_{k+1})-Q'_k(s'_k,a'_k).
	\end{align*}
Since $\mathcal{D}_{k'}=\mathcal{D}_{k'}'$ for each $k' \leq k$, comparing $\delta Q'_k(s_k,a_k)$ and $\delta Q_k(s'_k,a'_k)$ is reduced to comparing $\delta Q'_k(s_k,a_k)$ and $\delta Q_k(s_k,a_k)$. 
We show this by induction.
	
At training step $k=0$ there is no update. Thus, $\delta Q_0(s_0,a_0)=\delta Q'_0(s_0,a_0)$. 
Assume that the Bellman errors are identical up to a step $k=K$. 
That is, $\delta Q_k(s_k,a_k)=\delta Q'_k(s_k,a_k)$ $\forall k\leq K$. 
Then, the accumulated errors for the two agents until this step are also identical. 
That is, $\Delta Q_K(s,a)=\Delta Q'_K(s,a) ~ \forall s \forall a$. 
Consider training step $k=K+1$.
The state values at this step are: 
$V_K(s_{K+1}) = \log \sum_{a \in A}\exp\big[Q_0(s_{K+1},a)+\Delta Q_K(s_{K+1},a)\big]$ and 
$V'_K(s_{K+1})=\log \sum_{a \in A}\exp\big[Q_0(s_{K+1},a)+\phi(s_{K+1})+\Delta Q'_K(s_{K+1},a)\big]$ respectively.
	The Bellman errors at $k=K+1$ are:
	\begin{align*}
	\delta Q_{K+1}(s_K,a_K) = r(s_K,a_K) + &\gamma\phi(s_{K+1})-\phi(s_K)\\
	&+\gamma V_K(s_{K+1})-Q_K(s_K,a_K)\\
	= r(s_K,a_K) + \gamma\phi(s_{K+1})&-\phi(s_K)+\gamma V_K(s_{K+1})\\
	&-Q_0(s_K,a_K)-\Delta Q_K(s_K,a_K)
	\end{align*}
	\begin{align*}
	&\delta Q'_{K+1}(s_K,a_K) = r(s_K,a_K) + \gamma V'_K(s_{K+1})-Q'_K(s_K,a_K)\\
	&=r(s_K,a_K) + \gamma V'_K(s_{K+1}) -Q_0(s_K,a_K)-\phi(s_K)-\Delta Q'_K(s_K,a_K)\\
	&=\delta Q_{K+1}(s_K,a_K)-\gamma\phi(s_{K+1})+\gamma (V'_K(s_{K+1})-V_K(s_{K+1}))\\
	&=\delta Q_{K+1}(s_K,a_K)-\gamma\phi(s_{K+1})+\gamma\phi(s_{K+1})\\
	&=\delta Q_{K+1}(s_K,a_K).
	\end{align*}
It follows that $\Delta Q_{K+1}(s,a)=\Delta Q'_{K+1}(s,a) ~ \forall s \forall a$.
\end{proof}
%
\begin{rem}
	If the Q-function is represented by a function approximator (as is typical for continuous action spaces), then Proposition \ref{learnability_prop} may not hold. 
	This is because the Q-function in this scenario is updated using gradient descent, instead of Equation (\ref{softupdate}). 
Gradient descent is sensitive to initialization. 
Thus, different initial values will result in different updates of the Q-function.
\end{rem}
\section{PBA for Stochastic Policy Learning}\label{PBASection}

Although PBRS can preserve the optimality of policies in several settings, it suffers from the drawback of being unable to encode richer information, such as desired relations between states and actions. 
The authors of \cite{Wiewiora2003principled} proposed \emph{potential-based advice} (PBA), a scheme that augments the potential function by including actions as an argument together with states. 
In this section, we show that while using PBA, recovering the optimal policy can be difficult if the optimal policy is stochastic. 
Then, we propose a novel way to impart prior information in order to learn a  stochastic policy with PBA. 
\subsection{Stochastic policy learning with PBA}

Assume that we can compute $Q^{*}_{M}(s,a)$, the optimal value for state-action pair $(s,a)$ in MDP $M$. 
The optimal stochastic policy for $M$ is $\pi^*_M = \argmax_{\tau \sim \pi}\mathbb{E}_{\pi}\big[Q^{*}_{M}(s,a)\big]$. 
From Equation (\ref{PBAq}), the optimal stochastic policy for the modified MDP $M'$ that has its reward augmented with PBA is given by $\pi^*_{M'} = \argmax_{\pi}\mathbb{E}_{\tau \sim \pi}\big[Q^{*}_{M}(s,a)-\phi(s,a)\big]$. 
Without loss of generality, $\pi^*_M \neq \pi^*_{M'}$. 
If the optimal policy is deterministic, then the policy for $M$ can be recovered easily from that for $M'$ using Equation (\ref{PBAp}). 
However, when it is stochastic, we need to average over trajectories in the MDP, which makes it difficult to recover the optimal policy for $M$ from that of $M'$. 

In the sequel, we will propose a novel way to take advantage of PBA in the policy gradient framework in order to directly learn a stochastic policy.
\subsection{Imparting PBA in policy gradient}

Let $J_M(\theta)$ denote the value of a parameterized policy $\pi_{\theta}$ in MDP $M$. 
That is, $J_M(\theta) = \mathbb{E}_{\tau \sim \pi_{\theta}}\left[\sum_{t=0}^{\infty}\gamma^t R(s_t,a_t)\right]$. 
Following the policy gradient theorem \cite{sutton2018reinforcement}, and defining $G(s_t,a_t):=\sum_{i=t}^{i=\infty}\gamma^{i-t}r_i$, the gradient of $J(\theta)$ with respect to the parameter $\theta$ is given by:
\begin{align}\label{REINFORCE}
\nabla_{\theta}J_M(\theta) = \mathbb{E}_{\tau \sim \pi_{\theta}}\big[G(s_t,a_t)\nabla_{\theta}\log\pi_{\theta}(a_t|s_t)\big].
\end{align}
Then, $\mathbb{E}_{\tau \sim\pi_{\theta}}\big[G(s_t,a_t)\big]=Q^{\pi_{\theta}}(s_t,a_t)$. 

REINFORCE \cite{sutton2018reinforcement} is a policy gradient method that uses Monte Carlo simulation to learn $\theta$, where the parameter update is performed only at the end of an episode (a trajectory of length $T$). 
If we apply a look-ahead PBA scheme as in Equation (\ref{lookaheadPBA}) along with REINFORCE, then the total return from time $t$ is given by:
\begin{align}
\begin{split}
G^{a}(s_t,a_t)&=\sum_{i=t}^{i=T}\gamma^{i-t}r_i+\gamma^{T-t}\phi(s_T,a_T)-\phi(s_t,a_t) \\
&=G(s_t,a_t)+\gamma^{T-t}\phi(s_T,a_T)-\phi(s_t,a_t).
\end{split}
\end{align}

Notice that if $G^{a}(s_t,a_t)$ is used in Equation (\ref{REINFORCE}) instead of $G(s_t,a_t)$, then the policy gradient is biased. 
One way to resolve the problem is to add the difference $-\gamma^{T-t}\phi(s_T,a_T)+\phi(s_t,a_t)$ to $G^{a}(s_t,a_t)$. 
However, this makes the learning process identical to the original REINFORCE and PBA is not used. 
While using PBA in a policy gradient setup, it it important to add the term $\phi(s,a)$ so that the policy gradient is unbiased, and also leverage the advantage that PBA offers during learning.
%

To apply PBA in policy gradient, we turn to temporal difference (TD) methods. 
TD methods update estimates of the accumulated return based in part on other learned estimates, before the end of an episode.
A popular TD-based policy gradient method is the actor-critic framework \cite{sutton2018reinforcement}. 
In this setup, after performing action $a_t$ at step $t$, the accumulated return $G(s_t,a_t)$ is estimated by $Q_M(s_t,a_t)$ which, in turn, is estimated by $r_t+\gamma V_M(s_{t+1})$. 
It should be noted that the estimates are unbiased.

When the reward is augmented with look-ahead PBA, the accumulated return is changed to $Q_{M'}(s_t,a_t)$, which is estimated by $r_t+\gamma\phi(s_{t+1},a_{t+1})-\phi(s_t,a_t)+\gamma V_{M'}(s_{t+1})$. 
From Equation (\ref{PBAq}), at steady state, $Q_M(s_t,a_t)-Q_{M'}(s_t,a_t)=\phi(s_t,a_t)$. 
Intuitively, to keep policy gradient unbiased when augmented with look-ahead PBA, we can add $\phi(s_t,a_t)$ at each training step. 
In other words, we can use $r_t+\gamma\phi(s_{t+1},a_{t+1})+\gamma V_{M'}(s_{t+1})$ as the estimated return. 
It should be noted that before the policy reaches steady state, adding $\phi(s_t,a_t)$ at each time step will not cancel out the effect of PBA. 
This is unlike in REINFORCE, where the addition of this term negates the effect of using PBA. 
In the advantage actor-critic, an advantage term is used instead of the Q-function in order to reduce the variance of the estimated policy gradient. 
In this case also, the potential term $\phi(s_t,a_t)$ can be added in order to keep the policy gradient unbiased. 
%
%
\begin{varalgorithm}{AC-PBA}
	\caption{: Actor-critic augmented with PBA}
\begin{algorithmic} \label{Algo1}
	\renewcommand{\algorithmicrequire}{\textbf{Input:}}
	\REQUIRE Differentiable policy function $\pi_{\theta}(a|s)$
	
	\hspace{8.8mm}Differentiable value function $V^{\omega}(s)$
	
	\hspace{8.8mm}Potential-based advice $\phi(s,a)$
	
	\hspace{8.8mm}Maximum episode $T_{max}$
	
	\textbf{Initialization}: \\
	policy parameter $\theta$, value parameter $\omega$, learning rate $\alpha^{\theta}$ and $\alpha^{\omega}$, discount factor $\gamma$, episode counter $T \leftarrow 0$
	\REPEAT
	\STATE initialize state $s_0$, $t \leftarrow 0$
	\REPEAT
	\STATE Sample action $a_t \sim \pi_{\theta}(\cdot|s_t)$
	\STATE Take action $a_t$, observe reward $r_t$, next state $s_{t+1}$
	\STATE $R= \begin{cases}
	0, & \text{if }
	\begin{aligned}[t]
	s_{t+1} \text{ is a terminal state },
	\end{aligned}
	\\
	V^{\omega}(s_{t+1}), & \text{otherwise.}
	\end{cases}$
	
	\IF{use look-ahead advice} 
	\STATE $\delta_t=r_t + \gamma\phi(s_{t+1},a_{t+1})-\phi(s_t,a_t)+\gamma R - V^{\omega}(s_t)$ 
	\STATE Update $\theta \leftarrow \theta + \alpha^{\theta} \big(\delta_t+\phi(s_t,a_t)\big)\nabla_{\theta}\log\pi_{\theta}(a_t|s_t)$
	
	\ELSE 
	\STATE $\delta_t = r_t + \phi(s_{t},a_{t})-\gamma^{-1}\phi(s_{t-1},a_{t-1})+\gamma R - V^{\omega}(s_t)$
	\STATE Update $\theta \leftarrow \theta + \alpha^{\theta} \delta_t\nabla_{\theta}\log\pi_{\theta}(a_t|s_t)$
	\ENDIF
	
	\STATE Update $\omega \leftarrow \omega - \alpha^{\omega} \delta_t\nabla_{\omega}V^{\omega}(s_t)$
	
	\UNTIL{$s_{t+1}$ is a terminal state}
	\STATE $T \leftarrow T+1$
	\UNTIL{$T>T_{max}$}	
\end{algorithmic}
\end{varalgorithm}

A procedure for augmenting the advantage actor-critic with PBA is presented in Algorithm \ref{Algo1}. 
$\alpha^{\theta}$ and $\alpha^{\omega}$ denote learning rates for the actor and critic respectively. 
When applying look-ahead PBA, at training step $t$, parameter $\omega$ of the critic $V^{\omega}(s)$ is updated as follows:
\begin{align}
\delta^a_t &= r_t + \gamma\phi(s_{t+1},a_{t+1})-\phi(s_t,a_t)+\gamma V^{\omega}(s_{t+1}) - V^{\omega}(s_t)\nonumber \\
\omega &= \omega - \alpha^{\omega} \delta^a_t\nabla_{\omega}V^{\omega}(s_t),\nonumber
\end{align}
where $\delta^a_t$ is the estimation error of the state value after receiving new reward $[r_t + \gamma\phi(s_{t+1},a_{t+1})-\phi(s_t,a_t)]$ at step $t$.
To ensure an unbiased estimate of the policy gradient, the potential term $\phi(s_t,a_t)$ is added while updating $\theta$ as:
\begin{align}
\theta = \theta + \alpha^{\theta} \big(\delta^a_t+\phi(s_t,a_t)\big)\nabla_{\theta}\log\pi_{\theta}(a_t|s_t).\nonumber
\end{align}

A similar method can be used when learning with look-back PBA. 
In this case, the critic and the policy parameter are updated as follows:
\begin{align}
\delta^b_t &= r_t + \phi(s_{t},a_{t})-\gamma^{-1}\phi(s_{t-1},a_{t-1})+\gamma V^{\omega}(s_{t+1}) - V^{\omega}(s_t)\nonumber\\
\omega &= \omega - \alpha^{\omega} \delta^b_t\nabla_{\omega}V^{\omega}(s_t),\nonumber\\
\theta &= \theta + \alpha \big(\delta^b_t+\gamma^{-1}\mathbb{E}\big[\phi(s_{t-1},a_{t-1})|s_t\big]\big)\nabla_{\theta}\log\pi_{\theta}(a_t|s_t)\label{addback}
\end{align}

In fact, the potential term need not be added to ensure an unbiased estimate in this case. 
Then, the policy parameter update becomes:
\begin{align}\label{noaddback}
\theta = \theta + \alpha \delta^b_t\nabla_{\theta}\log\pi_{\theta}(a_t|s_t),
\end{align}
which is exactly the policy update of the advantage actor-critic. 
This is formally stated in Proposition \ref{PBAprop} 
\begin{prop} \label{PBAprop}
	When the actor-critic is augmented with look-back PBA, Equations (\ref{addback}) and (\ref{noaddback}) are equal in the sense of expectation. That is
	\begin{align}   
	\mathbb{E}_{(s_t,a_t) \sim \rho^{\pi_\theta}}\big[\big(\delta^b_t+&\gamma^{-1}\mathbb{E}\big[\phi(s_{t-1},a_{t-1})|s_t\big]\big)\nabla_{\theta}\log\pi_{\theta}(a_t|s_t)\big] \nonumber\\
	=	\quad&\mathbb{E}_{(s_t,a_t) \sim \rho^{\pi_\theta}}\big[\delta^b_t\nabla_{\theta}\log\pi_{\theta}(a_t|s_t)\big],
	\end{align} 
	where $\rho^{\pi_\theta}$ is the distribution induced by the policy $\pi_\theta$.
\end{prop}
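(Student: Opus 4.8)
The plan is to show that the two policy updates differ only by a term that is a genuine state-dependent baseline, and to invoke the standard score-function identity to conclude that this extra term contributes nothing in expectation. Subtracting the right-hand side of the claimed identity from the left-hand side, the proposition reduces to establishing that
\begin{align*}
\mathbb{E}_{(s_t,a_t)\sim\rho^{\pi_\theta}}\big[\gamma^{-1}\mathbb{E}\big[\phi(s_{t-1},a_{t-1})|s_t\big]\,\nabla_{\theta}\log\pi_{\theta}(a_t|s_t)\big]=0.
\end{align*}

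First I would set $b(s_t):=\gamma^{-1}\mathbb{E}\big[\phi(s_{t-1},a_{t-1})|s_t\big]$ and argue that $b$ is a function of $s_t$ alone. The key observation is that the conditioning is on the \emph{current} state $s_t$, whereas $(s_{t-1},a_{t-1})$ lies strictly in the past; since $a_t$ is sampled from $\pi_{\theta}(\cdot|s_t)$ only after $s_t$ is realized, the Markov property guarantees that $a_t$ carries no information about the conditional law of the past given $s_t$. Hence $b(s_t)$ does not depend on $a_t$ and may be treated as constant once $s_t$ is fixed. Next I would factor the visitation expectation as $\mathbb{E}_{(s_t,a_t)\sim\rho^{\pi_\theta}}[\cdot]=\mathbb{E}_{s_t}\,\mathbb{E}_{a_t\sim\pi_{\theta}(\cdot|s_t)}[\cdot]$, pull $b(s_t)$ outside the inner expectation, and apply the identity $\mathbb{E}_{a_t\sim\pi_{\theta}(\cdot|s_t)}[\nabla_{\theta}\log\pi_{\theta}(a_t|s_t)]=\int\pi_{\theta}(a_t|s_t)\nabla_{\theta}\log\pi_{\theta}(a_t|s_t)\,\text{d}a_t=\nabla_{\theta}\int\pi_{\theta}(a_t|s_t)\,\text{d}a_t=\nabla_{\theta}1=0$. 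The inner expectation vanishes for every $s_t$, so the entire difference is zero and Equations (\ref{addback}) and (\ref{noaddback}) coincide in expectation.

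I expect the main obstacle to be the first step rather than the computation: one must argue carefully that $\mathbb{E}\big[\phi(s_{t-1},a_{t-1})|s_t\big]$ depends on $s_t$ only and not on $a_t$, because the baseline argument collapses the instant this quantity secretly encodes action information. Everything after that is the well-known control-variate property of the REINFORCE score function, and the interchange of $\nabla_{\theta}$ with the action integral requires only the standard differentiability and dominated-convergence regularity on $\pi_{\theta}$ that is already implicit in the policy gradient theorem used in Equation (\ref{REINFORCE}).
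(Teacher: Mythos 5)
Your proposal is correct and follows essentially the same route as the paper's own proof: reduce the claim to showing $\mathbb{E}_{(s_t,a_t)\sim\rho^{\pi_\theta}}\big[\mathbb{E}[\phi(s_{t-1},a_{t-1})|s_t]\,\nabla_{\theta}\log\pi_{\theta}(a_t|s_t)\big]=0$, observe that the inner conditional expectation is a function of $s_t$ (and $\pi_{\theta}$, $\mathbb{T}$) alone, and then apply the score-function identity $\mathbb{E}_{a_t\sim\pi_{\theta}(\cdot|s_t)}[\nabla_{\theta}\log\pi_{\theta}(a_t|s_t)]=\nabla_{\theta}\int_A\pi_{\theta}(a|s_t)\,\text{d}a=0$. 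Your added care in justifying, via the Markov property, that the baseline does not secretly depend on $a_t$ is a point the paper asserts without elaboration, but it does not change the argument.
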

\begin{proof}
It is equivalent to show that:
\begin{align}   
\mathbb{E}_{(s_t,a_t) \sim \rho^{\pi_\theta}}\big[\mathbb{E}\big[\phi(s_{t-1},a_{t-1})|s_t\big]\nabla_{\theta}\log\pi_{\theta}(a_t|s_t)\big] = 0.
\end{align} 
The inner expectation $\mathbb{E}\big[\phi(s_{t-1},a_{t-1})|s_t\big]$ is a function of $s_t$, policy $\pi_{\theta}$, and transition probability $\mathbb{T}$. 
Denoting this expectation by $f(s_t,\pi_{\theta},\mathbb{T})$, we obtain:
\begin{align}   
&\mathbb{E}_{(s_t,a_t) \sim \rho^{\pi_\theta}}\big[f(s_t,\pi_{\theta},\mathbb{T})\nabla_{\theta}\log\pi_{\theta}(a_t|s_t)\big]\nonumber \\= 
&\mathbb{E}_{s_t \sim
	\rho^{\pi_\theta}}\bigg[\mathbb{E}_{a_t\sim \pi_\theta}\big[f(s_t,\pi_{\theta},\mathbb{T})\nabla_{\theta}\log\pi_{\theta}(a_t|s_t)\big]\bigg]\nonumber \\= 
&\mathbb{E}_{s_t \sim
	\rho^{\pi_\theta}}\bigg[\int_{A}\pi_{\theta}(a_t|s_t)f(s_t,\pi_{\theta},\mathbb{T})\frac{\nabla_{\theta}\pi_{\theta}(a_t|s_t)}{\pi_{\theta}(a_t|s_t)}\text{d}a\bigg] \nonumber\\= 
&\mathbb{E}_{s_t \sim
	\rho^{\pi_\theta}}\bigg[f(s_t,\pi_{\theta},\mathbb{T})\nabla_{\theta}\int_{A}\pi_{\theta}(a_t|s_t)\text{d}a\bigg] = 0.
\end{align}
The last equality follows from the fact that the integral evaluates to $1$, and its gradient is $0$.
\end{proof}

The main result of this paper presents guarantees on the convergence of Algorithm \ref{Algo1} using the theory of `two time-scale stochastic analysis'  \cite{Borkar2000ODE}. Assume that:
\begin{itemize}
	\item \textbf{A1}: The value function $V^{\omega}(s)$ belongs to a linear family. That is, $V^{\omega}=\Phi \omega$, where $\Phi \in \mathbb{R}^{|S|\times k}, k <S$ is a known full-rank feature matrix, and $\omega \in \Omega  \subseteq \mathbb{R}^{k}$.
	\item \textbf{A2}: For the set of policies $\{\pi_{\theta}, \theta \in \Theta\subseteq \mathbb{R}^{d}\}$, there exists a constant $C_{\Theta}$ such that $\norm{\nabla_{\theta}\log\pi_{\theta}}_2\leq C_{\Theta}$.
	\item \textbf{A3}: Learning rates of the actor and critic satisfy: $\sum_t \alpha_t^{\theta}=\sum_t \alpha_t^{\omega}=\infty$, $\sum_t [(\alpha_t^{\theta})^2+(\alpha_t^{\omega})^2]<\infty$, 
	$\lim\limits_{t\rightarrow \infty}\frac{\alpha_t^{\theta}}{\alpha_t^{\omega}}=0$.
\end{itemize}
For any probability measure $\mu$ on a finite set $\mathcal{M}$, the $\ell_2$-norm of $f$ with respect to $\mu$ is given by $\norm{f}_{\mu}:=\big[\int_{\mathcal{M}}|f(x)|^2\text{d}\mu(x)\big]^{\frac{1}{2}}$. 
Theorem \ref{convergence} gives a bound on the error introduced as a result of approximating the value function $V_{M'}$ with $V_{M'}^{\omega}$ as in assumption \textbf{A1}. 
This error term is small if the family $\Omega$ is rich. 
In fact, if the critic is updated in batches, a tighter bound can be achieved, as shown in Proposition 1 of \cite{Yang2018finite}. 
Extending the result to the case of online updates is a subject of future work.
\begin{thm}\label{convergence} 
Let $\mathcal{E}(\theta):=\norm{V^{\omega(\theta)}_{M'}(s)-V^{\pi_{\theta}}_{M'}(s)}_{\rho^{\pi_{\theta}}}$.
Then, for any limit point $(\theta^*,\omega^*) :=\lim\limits_{T_{max} \to \infty}(\theta_{T_{max}},\omega_{T_{max}})\}$ of Algorithm \ref{Algo1}, 
	$\norm{\nabla_{\theta}J_M(\theta^*)}_2\leq C\mathcal{E}(\theta^*)$.
\end{thm}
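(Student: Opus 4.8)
The plan is to combine the two time-scale stochastic approximation framework of \cite{Borkar2000ODE} with the potential/baseline identity underlying Proposition \ref{PBAprop}, and then convert the stationarity of the coupled limiting ODEs into the claimed gradient bound. I will treat the look-ahead case; the look-back case is analogous by Proposition \ref{PBAprop}.

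First I would exploit the time-scale separation. Assumption \textbf{A3} (with $\alpha_t^{\theta}/\alpha_t^{\omega}\to 0$) places the critic on the fast scale and the actor on the slow scale, so that for the critic recursion the policy parameter $\theta$ is quasi-static. For each frozen $\theta$, the critic update $\omega \leftarrow \omega - \alpha^{\omega}\delta_t^a\nabla_{\omega}V^{\omega}(s_t)$ is ordinary linear TD(0) on the shaped MDP $M'$; under the full-rank linear architecture of \textbf{A1} its limiting ODE has a unique globally asymptotically stable equilibrium $\omega(\theta)$ solving the projected Bellman equation for $V^{\pi_{\theta}}_{M'}$, so that $V^{\omega(\theta)}_{M'}$ is the $\rho^{\pi_{\theta}}$-orthogonal projection of $V^{\pi_{\theta}}_{M'}$ onto the span of $\Phi$. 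Invoking the two time-scale theorem, the fast iterate tracks $\omega(\theta_t)$ and the analysis reduces to the slow actor ODE $\dot{\theta}=h(\theta)$, where $h(\theta):=\mathbb{E}_{(s,a)\sim\rho^{\pi_{\theta}}}[(\delta_t^a+\phi(s,a))\nabla_{\theta}\log\pi_{\theta}(a|s)]$ is evaluated with the critic equilibrated at $\omega(\theta)$. Assumption \textbf{A2} supplies the Lipschitz continuity and boundedness making this ODE well-posed, so any limit point $(\theta^*,\omega^*)$ satisfies $\omega^*=\omega(\theta^*)$ and $h(\theta^*)=0$.

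Second, I would compute the idealized direction obtained if the critic returned the exact $V^{\pi_{\theta}}_{M'}$ rather than its projection. Taking conditional expectations of $\delta_t^a+\phi(s,a)$ over the next state-action pair shows this is $g(\theta):=\mathbb{E}_{(s,a)\sim\rho^{\pi_{\theta}}}[(A^{\pi_{\theta}}_{M'}(s,a)+\phi(s,a))\nabla_{\theta}\log\pi_{\theta}(a|s)]$. Using Equation (\ref{PBAq}) in the form $Q^{\pi_{\theta}}_M = Q^{\pi_{\theta}}_{M'}+\phi$, I obtain $A^{\pi_{\theta}}_{M'}(s,a)+\phi(s,a) = A^{\pi_{\theta}}_M(s,a)+\mathbb{E}_{a'\sim\pi_{\theta}}[\phi(s,a')]$, whose last term depends on $s$ alone. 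By exactly the baseline argument of Proposition \ref{PBAprop}, this state-dependent term contributes zero, so $g(\theta)=\mathbb{E}[A^{\pi_{\theta}}_M(s,a)\nabla_{\theta}\log\pi_{\theta}(a|s)]=\nabla_{\theta}J_M(\theta)$. Writing $e(s):=V^{\pi_{\theta}}_{M'}(s)-V^{\omega(\theta)}_{M'}(s)$, only the value terms of the residual change between $g$ and $h$, giving $g(\theta)-h(\theta)=\mathbb{E}_{(s,a)\sim\rho^{\pi_{\theta}}}[(\gamma\mathbb{E}_{s'}[e(s')]-e(s))\nabla_{\theta}\log\pi_{\theta}(a|s)]$. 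Evaluating at the limit point and using $h(\theta^*)=0$ yields $\nabla_{\theta}J_M(\theta^*)=\mathbb{E}[(\gamma\mathbb{E}_{s'}[e(s')]-e(s))\nabla_{\theta}\log\pi_{\theta}(a|s)]$. Applying Cauchy--Schwarz with $\norm{\nabla_{\theta}\log\pi_{\theta}}_2\le C_{\Theta}$ from \textbf{A2}, Jensen's inequality to move the absolute value inside $\mathbb{E}_{s'}$, the stationarity of $\rho^{\pi_{\theta^*}}$ (so $\mathbb{E}_{s\sim\rho}\mathbb{E}_{s'|s}|e(s')|=\mathbb{E}_{s'\sim\rho}|e(s')|$), and finally $L^1\le L^2$ under a probability measure, bounds everything by $C_{\Theta}(1+\gamma)\norm{e}_{\rho^{\pi_{\theta^*}}}=C_{\Theta}(1+\gamma)\mathcal{E}(\theta^*)$, establishing the claim with $C=C_{\Theta}(1+\gamma)$.

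The main obstacle I anticipate is not the algebra but discharging the hypotheses of the two time-scale theorem: one must verify almost-sure boundedness of the iterates, confirm that the martingale-difference noise from sampling $(s_t,a_t,s_{t+1})$ has bounded conditional variance, and establish global asymptotic stability of the critic ODE uniformly in $\theta$ so that $\omega(\theta)$ is a well-defined Lipschitz map. The linear architecture of \textbf{A1} is essential, since it both guarantees the unique TD fixed point and makes $\mathcal{E}(\theta)$ precisely the projection gap introduced by function approximation; without it the clean reduction via $h(\theta^*)=0$ would fail, which is exactly the point noted by the paper for the online versus batch update regimes.
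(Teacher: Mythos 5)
Your proposal is correct and follows essentially the same route as the paper's proof: two time-scale separation so the critic equilibrates at $\omega(\theta)$ while the actor is quasi-static, stationarity of the limiting actor ODE at $\theta^*$, the shift identity $Q^{\pi_\theta}_M = Q^{\pi_\theta}_{M'} + \phi$ to rewrite $\nabla_\theta J_M$ in terms of the shaped quantities, and Cauchy--Schwarz with \textbf{A2} to bound the resulting residual by $\mathcal{E}(\theta^*)$. The only cosmetic differences are that you carry the state-value baseline $-V^{\omega}(s)$ through the subtraction and bound it (giving $C=(1+\gamma)C_\Theta$, whereas the paper drops this zero-mean baseline and obtains $\gamma C_\Theta$), you route the potential through the advantage form $A_{M'}+\phi = A_M + \mathbb{E}_{a'}[\phi(s,a')]$ rather than expanding $Q_{M'}$ directly, and you are more explicit about the hypotheses of the two time-scale theorem that the paper leaves implicit.
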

%
%
\begin{proof}
We consider only look-ahead PBA. 
The proof for look-back PBA follows similarly. 
Define $F:=F(s,a,s',a')$. 
From assumption \textbf{A3}, the actor is updated at a slower rate than the critic. 
This allows us to fix the actor to study the asymptotic behavior of the critic \cite{Bhatnagar2009}.  
The update dynamics of the critic can be represented by:
\begin{align}\label{ODE_critic}
\dot{\omega}=\mathbb{E}_{ \rho^{\pi_{\theta}}}\big[\delta_{\omega}\nabla_{\omega}V^{\omega}_{M'}(s)\big],
\end{align}
where $\delta_{\omega}=r(s,a) + \gamma\phi(s',a')-\phi(s,a)+\gamma V^{{\omega}}(s') - V^{\omega}(s)$ if look-ahead PBA is applied. 
When the critic is 
approximated by a linear function (assumption \textbf{A1}), $\omega$ will converge to $\omega(\theta)$, an asymptotically stable equilibrium of Equation (\ref{ODE_critic}). 
The update of the actor is then:
\begin{align}\label{ODE_actor}
\dot{\theta}=\mathbb{E}_{ \rho^{\pi_{\theta}}}\big[\nabla_{\theta}\log \pi_{\theta}(a|s)\big(r(s,a)+F+\gamma V_{M'}^{\omega(\theta)}(s')+\phi(s,a)\big)\big].
\end{align}
Let $\Theta_s$ denote the set of asymptotic stable equilibria in Equation (\ref{ODE_actor}). 
Any $\theta \in \Theta_s$ will satisfy $\dot{\theta}=0$ in Equation (\ref{ODE_actor}). 
Then, $\{(\theta_t,\omega_t)\}_{t>0}$ will converge to $\{(\theta,\omega(\theta)):\theta \in \Theta_s\}$.

Now, consider the evaluation of $\pi_{\theta}$, $\theta \in \Theta_s$, in the original MDP $M$.
We obtain the following equations:
\begin{align}\label{evaluation}
&\nabla_{\theta}J_M(\theta) =
\mathbb{E}_{ \rho^{\pi_{\theta}}}\big[\nabla_{\theta}\log \pi_{\theta}(a|s)Q^{\pi_{\theta}}_{M}(s,a)\big] \nonumber\\
&= \mathbb{E}_{ \rho^{\pi_{\theta}}}\big[\nabla_{\theta}\log \pi_{\theta}(a|s)\big(Q^{\pi_{\theta}}_{M'}(s,a)+\phi(s,a)\big)\big] \nonumber\\
& = \mathbb{E}_{ \rho^{\pi_{\theta}}}\big[\nabla_{\theta}\log \pi_{\theta}(a|s)\big(r(s,a)+F+\gamma V^{\pi_{\theta}}_{M'}(s')+\phi(s,a)\big)\big]. 
\end{align}
Subtracting Equation (\ref{ODE_actor}) from Equation (\ref{evaluation}), and applying the Cauchy-Schwarz inequality to the result yields:
\begin{align*}
\nabla_{\theta}J_M(\theta)&=\gamma \mathbb{E}_{ \rho^{\pi_{\theta}}}\big[\nabla_{\theta}\log \pi_{\theta}(a|s)\big( V_{M'}^{\omega(\theta)}(s')-V^{\pi_{\theta}}_{M'}(s')\big)\big]\\
\therefore \norm{\nabla_{\theta}J_M(\theta)}_2&\leq \gamma\norm{\nabla_{\theta}\log \pi_{\theta}(a|s)}_{\rho^{\pi_{\theta}}}\norm{V_{M'}^{\omega(\theta)}(s)-V^{\pi_{\theta}}_{M'}(s)}_{\rho^{\pi_{\theta}}}.
\end{align*}
The result follows by applying assumption \textbf{A2}.
\end{proof}
\begin{rem}
Look-back PBA could result in better performance compared to look-ahead PBA since look-back PBA does not involve estimating a future action. 
\end{rem}
\section{Experiments}\label{ExptSection}

Our experiments seek to compare the performance of an actor-critic architecture augmented with PBA and with PBRS with the `vanilla' advantage actor-critic (A2C). 
We consider two setups. 
The first is a \emph{Puddle-Jump Gridworld} \cite{marom2018belief}, where the state and action spaces are discrete. 
The second environment we study is a continuous state and action space \emph{mountain car} \cite{brockman2016openai}. 

In each experiment, we compare the rewards received by the agent when it uses the following schemes: \emph{i)}: `vanilla' (A2C); \emph{ii)}: A2C augmented with PBRS; \emph{iii)}: A2C with look-ahead PBA; \emph{iv)}: A2C with look-back PBA.
\subsection{Puddle-Jump Gridworld}
\begin{figure}
	\centering
	\includegraphics[width=1.5 in]{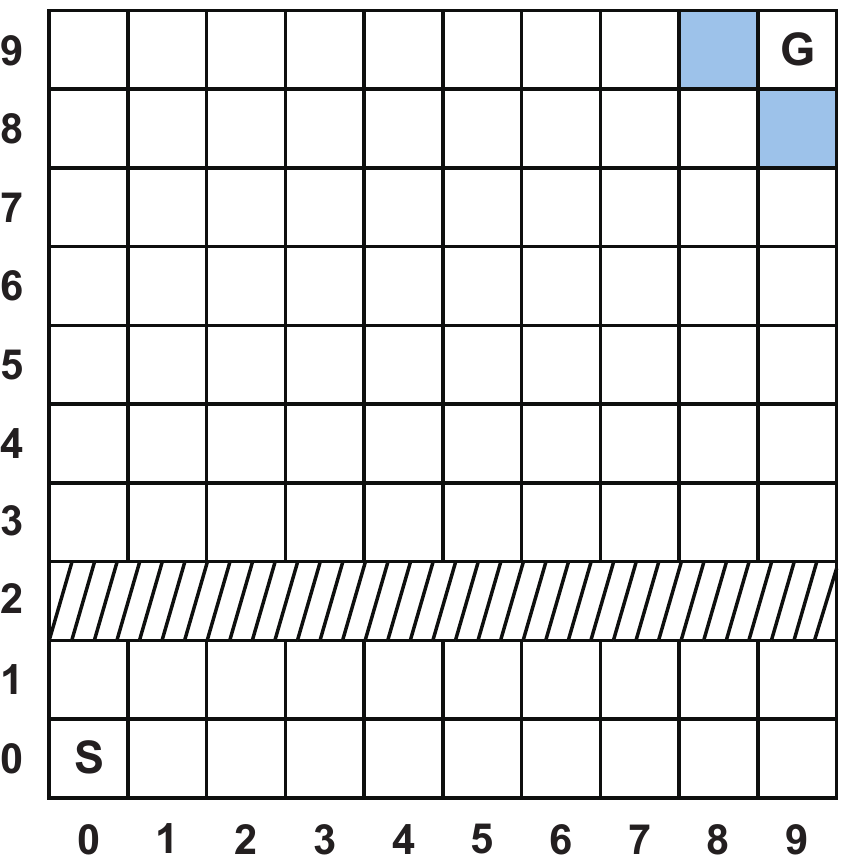}
	\caption{Schematic of the puddle-jump gridworld. The state of the agent is its position $(x,y)$. The shaded row (row $2$) represents the puddle the agent should jump over. The two blue grids denote states that are indistinguishable to the agent. The agent can choose an action from the set $\{up, down, left, right, jump\}$ at each step.}\label{GridWorld}
\end{figure}

Figure \ref{GridWorld} depicts the \emph{Puddle-jump gridworld} environment as a 10x10 grid. 
The state space is $s=(x,y)$ denoting the position of the agent in the grid, where $x,y \in \{0,1,\dots,9\}$. 
The goal of the agent is to navigate from the start state $S= (0,0)$ to the goal $G=(9,9)$. 
At each step, the agent can choose from actions in the set $A = \{up, down, left, right, jump\}$. 
There is a \emph{puddle} along row $2$ which the agent should jump over. Further, the states $(9,8)$ and $(8,9)$ (blue squares in Figure \ref{GridWorld}) are indistinguishable to the agent. 
As a result, any optimal policy for the agent is a stochastic policy.
 
If the $jump$ action is chosen in rows $3$ or $1$, the agent will land on the other side of the puddle with probability $p_j$, and remain in the same state otherwise. 
This action chosen in other rows will keep the agent in its current state. 
Any action that will move the agent off the grid will keep its state unchanged. 
The agent receives a reward of $-0.05$ for each action, and $+1000$ for reaching $G$.

When using PBRS, we set $\phi^{PBRS}(s):=u_0$ for states in rows $0$ and $1$,  and $\phi^{PBRS}(s):=u_1$ for all other states. 
We need $u_1>u_0$ to encourage the agent to jump over the puddle. 
Unlike in PBRS, PBA can provide the agent with more information about the  actions it can take. 
We set $\phi^{PBA}(s,a)$ to a `large' value if action $a$ at state $s$ results in the agent moving closer to the goal according to the $\ell_1$ norm, $\big(|G-x|+|G-y|
\big)$. 
We additionally stipulate that $\frac{1}{|A|}\sum_{a\in A}\phi^{PBA}(s,a) = \phi^{PBRS}(s)$. That is, the state potential of PBA is the same as the state potential of PBRS under a uniform distribution over the actions. 
This is to ensure a fair comparison between PBRS and PBA.

In our experiment, we set the discount factor $\gamma=1$. 
Since the dimensions of the state and action spaces is not large, we do not use a function approximator for the policy $\pi$. 
A parameter $\theta_{s,a}$ is associated to each state-action pair, and the policy is computed as:
$\pi_{\theta}(a|s)=\frac{\exp(\theta_{s,a})}{\sum_{a\in A}\exp(\theta_{s,a})}$.
We fix $\alpha^{\omega}= 0.001$, and $\alpha^{\theta}= 0.2$ for all cases. 
From Figure \ref{cliff_results}, we observe that the look-back PBA scheme performs the best, in that the agent converges to the goal in \textbf{five times} fewer episodes ($25$ vs. $125$ episodes) than A2C without advice. 
When A2C is augmented with PBRS, convergence to the goal is slightly faster than without any reward shaping. 
When augmented with look-ahead PBA, in the first few episodes, the reward increases faster than in the case of A2C augmented with PBRS. 
However, this slows down after the early training stages and the policy converges to the goal in about the same number of episodes as a policy trained without advice.
A reason for this could be that during later stages of training, a look-ahead PBA scheme might advise an agent with `bad' actions, leading to bad policies, thereby impeding the progress of learning. 
For example, an action $a_t$ might be a good choice at state $s_t$, but the look-ahead PBA scheme might indicate that $a_t$ is bad, due to a poor estimate of the future action $a_{t+1}$.
\begin{figure}
	\centering
	\includegraphics[width=2.8 in]{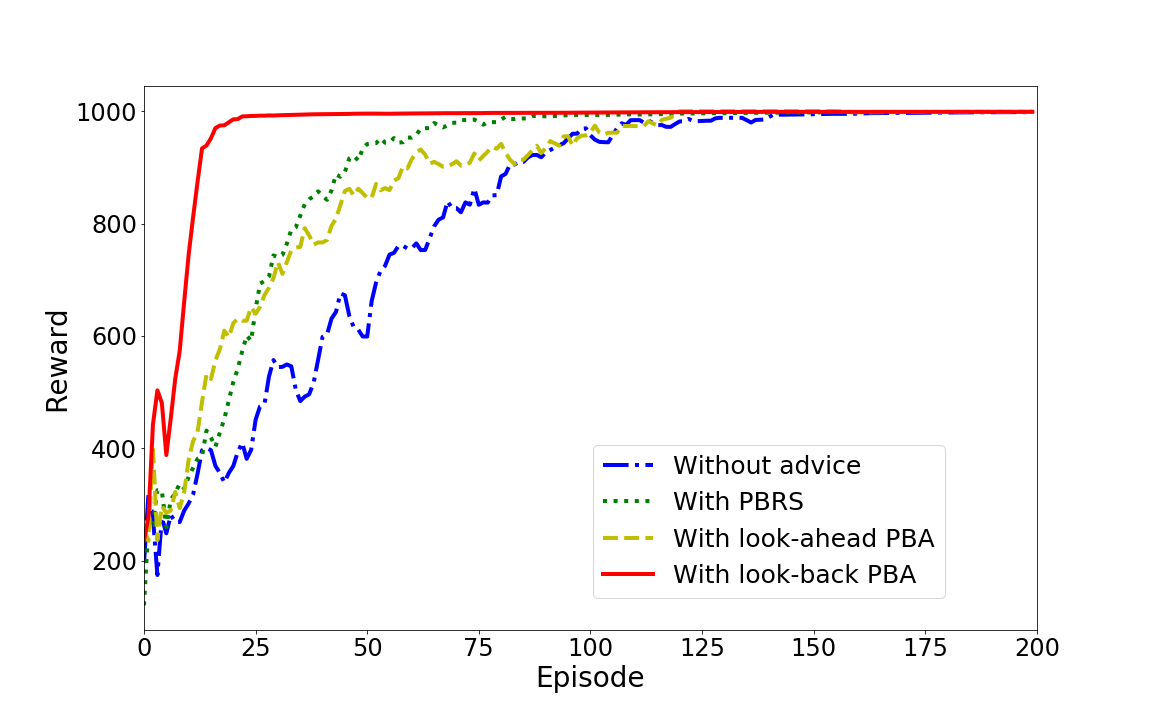}
	\caption{Average rewards in puddle-jump gridworld when jump success probability $p_j=0.2$. The baseline is the advantage actor-critic without advice.}\label{cliff_results}
\end{figure}

A smaller jump success probability $p_j$ is an indication that it is more difficult for the agent to reach the goal state $G$. 
Figure \ref{cliff_results2} shows that look-back PBA results in the highest reward for a more difficult task (lower $p_j$), when compared with the other reward shaping schemes.
\begin{figure}
	\centering
	\includegraphics[width=2.8 in]{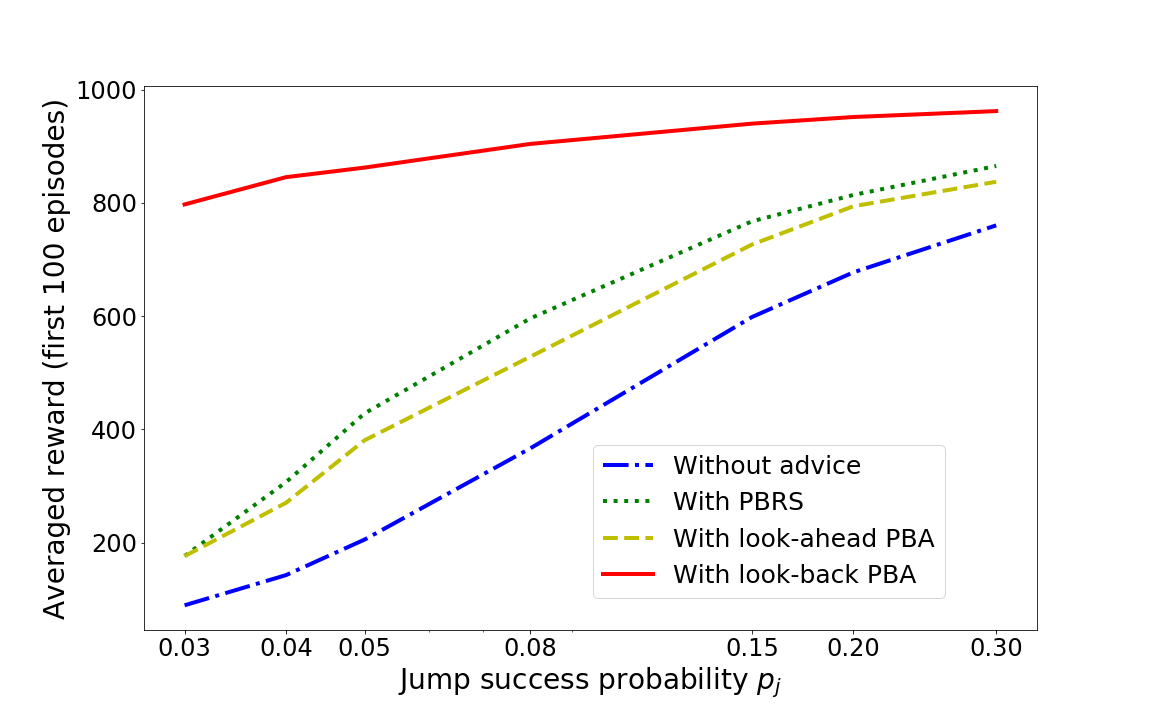}
	\caption{Average reward for the first 100 episodes with respect to the jump success probability $p_j$.}\label{cliff_results2}
\end{figure}
%
\subsection{Continuous Mountain Car}

In the mountain car (MC) environment, an under powered car in a valley has to drive up a steep hill to reach the goal. In order to achieve this, the car should learn how to accumulate momentum. 
A schematic for this environment is shown in Figure \ref{MountainCarFig}.
\begin{figure}
	\centering
	\includegraphics[width=3 in]{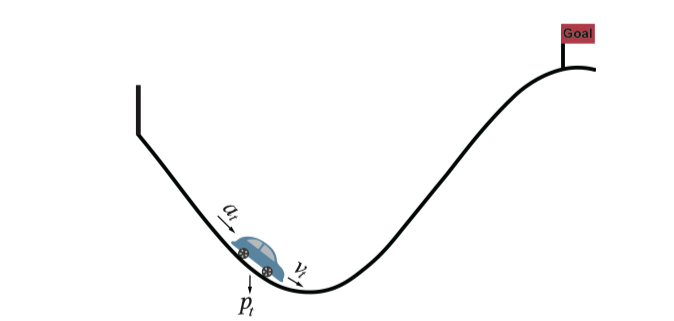}
	\caption{Schematic of the mountain-car environment. The agent's state is represented by its position $p_t$ (along the $x-$coordinate) and velocity $v_t$. The action $a_t$ is a force applied to the car. The goal is marked as a flag.}\label{MountainCarFig}
\end{figure}
This MC environment has continuous state and action spaces. 
The state $s=(p,v)$ denotes position $p \in [-1.2,0.6]$ and velocity $v \in [-0.07,0.07]$. 
The action $a \in [-1,+1]$. 
The continuous action space makes it difficult to use classic value-based methods, such as Q-learning and Sarsa-learning. 
The reward provided by the environment depends on the action and whether the car reaches the goal. 
Specifically, once the car reaches the goal it receives $+100$, and before that, the reward at time $t$ is $-|a_t|^2$. 
This reward structure therefore discourages the waste of energy. 
This acts as a barrier for learning, because there appears to be a sub-optimal solution where the agent remains at the bottom of the valley.
Moreover, the reward for reaching the goal is significantly delayed, which makes it difficult for the conventional actor-critic algorithm to learn a good policy. 

One choice of a potential function while using PBRS in this environment is $\phi^{PBRS}(s_t):=p_t+2$, where the offset is so that the potential is positive. 
An interpretation of this scheme is: \emph{`state value is larger when the car is horizontally closer to the goal.'} 
The PBA scheme we use for this environment encourages the accumulation of momentum by the car-- the direction of the action is encouraged to be the same as the current direction of the car's velocity. 
In the meanwhile, we discourage inaction. 
Mathematically, the potential advice function has a larger value if \emph{ $a_t\neq 0$}. 
We let $\phi^{PBA}(s_t,a_t)=1$, if $a_tv_t >0$, and $\phi^{PBA}(s_t,a_t)=0$, otherwise.

In our experiments, we set $\gamma= 0.99$. 
To deal with the continuous state space, we use a neural network (NN) as a  function approximator. 
The policy distribution $\pi_{\theta}(a|s)$ is approximated by a normal distribution, the mean and variance of which are the outputs of the NN. 
The value function is also represented by an NN. 
We set $\alpha^{\theta}=1\times 10^{-5}$ and $\alpha^{\omega}=5.6\times 10^{-4}$, and use Adam \cite{Adam} to update the NN parameters. 
The results we report are averaged over 10 different environment seeds.
\begin{figure}
	\centering
	\includegraphics[width=2.8 in]{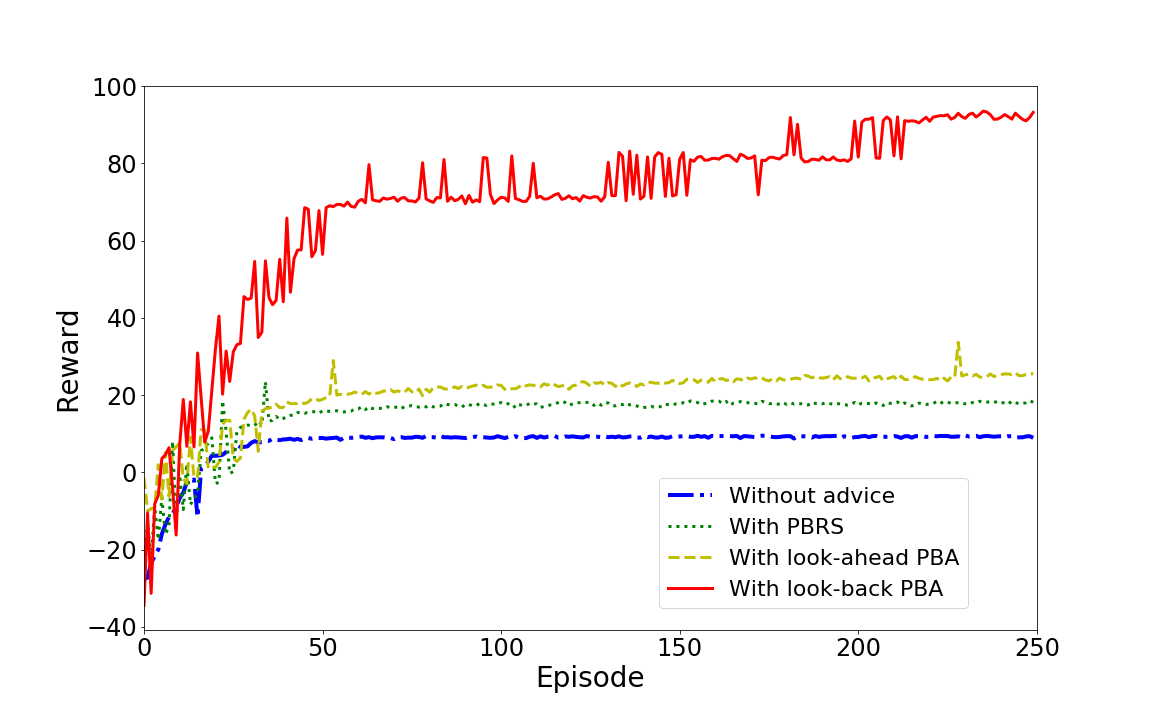}
	\caption{Average rewards for continuous mountain car problem (averaged over 10 different environment random seeds). The baseline is the A2C without advice.}\label{MountainCar_results}
\end{figure}
\begin{table}[h]
	\centering
	\begin{tabular}{|c | c | c | c |} 
		\hline
		No advice & PBRS & Look-ahead PBA & Look-back PBA \\ [0.5ex] 
		\hline\hline
		10\% & 20\% & 40\% & 100\% \\ 
		\hline
	\end{tabular}
	\caption{Percentage of trials where policy converges correctly in continuous mountain car problem.}
	\label{table:1}
\end{table}

Our experiments indicate that the policy makes the agent converge to one of two points: the goal, or remain stationary at the bottom of the valley. 
The percentage of solutions that converge to the goal is shown in Table \ref{table:1}. 
From Figure \ref{MountainCar_results} and Table \ref{table:1}, when learning with the vanilla A2C, the agent is able to reach the goal only in $10\%$ of the trials (out of 10 trials), and was stuck at the sub-optimal solution for the remaining trials. 
With PBRS, the agent could converge correctly in only $20\%$ of the trials. 
This is because the agent might have to take an action that moves it away from the goal in order to accumulate momentum. 
However, the potential function $\phi^{PBRS}(\cdot)$ discourages such actions. 
In comparison, the average reward when using look-ahead PBA is slightly higher, but the agent is able to reach the goal in only $40\%$ of the trials.  Similar to the gridworld setup, look-back PBA performs the best, where the agent is able to reach the goal in $100\%$ of the trials.
\section{Conclusion}\label{ConclusionSection}

This paper presented a framework for augmenting the reward received by an RL agent with PBRS and with PBA. 
Different from prior work, we demonstrated that our approach can be used in environments with continuous states and actions, and when the optimal policy is stochastic. 
We presented guarantees on the convergence of an algorithm that augments an A2C architecture with these schemes. 
Our experiments indicated that these schemes allowed the agent to achieve higher average rewards, and learn an optimal policy faster. 
Future work will focus on establishing tighter bounds for Theorem \ref{convergence}, and extending our approach to the average reward case.
%
%
%
%
%
\bibliographystyle{IEEEtran}
\bibliography{AC_PBA}  
\end{document}